\DeclareMathOperator*{\argmin}{arg\,min}
\DeclareMathOperator*{\E}{\mathbb{E}}
\DeclareMathOperator{\sign}{sign}
\newcommand{\eps}{\epsilon}
\newtheorem {theorem}{Theorem}
\newtheorem {corollary}[theorem]{Corollary}
\newcommand{\eqr}[1]{Eq.~\eqref{eq:#1}}
\newcommand {\norm}[1]{\ensuremath{\| #1 \|}}
\newcommand {\paren}[1]{\ensuremath{\left(#1\right)}}
\newcommand {\set}[1]{\ensuremath{\left\{#1\right\}}}
\newcommand{\BO}{\mathcal{O}}
\newcommand{\R}{\ensuremath{\mathbb{R}}}
\newcommand{\abs}[1]{|#1|}
\newcommand{\Regret}{\operatorname{Regret}}
\newcommand{\Reward}{\text{Reward}}
\newcommand{\Loss}{\text{Loss}}
\newcommand{\Winnings}{\text{Winnings}}
\renewcommand{\Pr}{\operatorname{Pr}}
\newcommand{\spm}{\set{-1,1}}
\newcommand{\ti}{_{t+1}}
\newcommand{\xs}{\mathring{x}}
\newcommand {\bexp}[1]{\exp\left(#1\right)}
\newcommand {\bef}[2]{\exp\left(\frac{#1}{#2}\right)}
\newcommand{\rep}[1]{\left\langle #1 \right\rangle}
\newcommand {\p}[1]{\left(#1\right)}
\newcommand{\X}{\mathcal{X}}
\newcommand{\G}{\mathcal{G}}
\newcommand{\PB}{\mathcal{B}}
\newcommand{\h}{\frac{1}{2}}
\newcommand{\pfrac}[2]{\left(\frac{#1}{#2}\right)}
\definecolor{darkgreen}{rgb}{0,0.4,0.0}
\newcommand{\arxivonly}[1]{#1}
\title{Minimax Optimal Algorithms \\
for Unconstrained Linear Optimization}
\author{H. Brendan McMahan \\
Google, Inc. \\
\texttt{\small{mcmahan@google.com}}
}
\begin{document}
\maketitle

\begin{abstract}
  We design and analyze minimax-optimal algorithms for online linear
  optimization games where the player's choice is unconstrained.  The
  player strives to minimize regret, the difference between his loss
  and the loss of a post-hoc benchmark strategy.  The standard
  benchmark is the loss of the best strategy chosen from a bounded
  comparator set.  When the the comparison set and the adversary's
  gradients satisfy $L_\infty$ bounds, we give the value of the game
  in closed form and prove it approaches $\sqrt{2 T / \pi}$ as $T
  \rightarrow \infty$.

  Interesting algorithms result when we consider soft constraints on
  the comparator, rather than restricting it to a bounded set.  As a
  warmup, we analyze the game with a quadratic penalty.  The value of
  this game is exactly $T/2$, and this value is achieved by perhaps
  the simplest online algorithm of all: unprojected gradient descent
  with a constant learning rate.
  We then derive a minimax-optimal algorithm for a much softer penalty
  function.  This algorithm achieves good bounds under the standard
  notion of regret for any comparator point, without needing to
  specify the comparator set in advance.  The value of this game
  converges to $\sqrt{e}$ as $T \rightarrow \infty$; we give a
  closed-form for the exact value as a function of $T$.  The resulting
  algorithm is natural in unconstrained investment or betting
  scenarios, since it guarantees at worst constant loss, while
  allowing for exponential reward against an ``easy'' adversary.
\end{abstract}

\section{Introduction}
Minimax analysis has recently been shown to be a powerful tool for the
construction of online learning algorithms~\citep{rakhlin12relax}.
Generally, these results use bounds on the value of the game (often
based on the sequential Rademacher complexity) in order to construct
efficient algorithms.  In this work, we show that when the learner is
unconstrained, it is often possible to efficiently compute an exact
minimax strategy.

We consider a game where on each round
$t=1, \dots, T$, first the learner selects $x_t \in \R^n$, and then an adversary
chooses $g_t \in \G \subset \R^n$, and the learner suffers loss $g_t
\cdot x_t$.
The goal of the learner is to minimize regret, that is, loss in excess
of that achieved by a benchmark strategy.  We define
\begin{equation}\label{eq:linreg}
 \Regret = \Loss - (\text{Benchmark Loss})
         = \sum_{t=1}^T g_t \cdot x_t - L(g_1, \dots, g_T)
\end{equation}
as the regret with respect to benchmark performance $L$ (the $L$
intended will be clear from context).  Letting $I(x \in \X) = 0$ for
$x \in \X$ and $\infty$ otherwise, the standard definition of regret
arises from the choice
\begin{equation}\label{eq:stdL}
 L(g_1, \dots, g_T) = \inf_{x \in \R^n} g_{1:T} \cdot x + I(x \in \X),
\end{equation}
the loss of the best strategy in a bounded convex set $\X$ (we write
$g_{1:t} = \sum_{s=1}^t g_s$ for a sum of scalars or vectors).  When
$L$ depends only on the sum $G \equiv g_{1:T}$ we write $L(G)$.  We
will be able to interpret the alternative benchmarks $L$ we consider
as penalties $\Psi$ on comparator points, so $L(G) = \argmin_x G \cdot
x + \Psi(x)$, where $\Psi(x)$ has replaced $I(x \in \X)$ in
\eqr{stdL}.

We view this interaction as a sequential zero-sum game played over $T$
rounds, where the player strives to minimize \eqr{linreg}, and the
adversary attempts to maximize it.  We study the value of this game,
$V^T$, and design minimax optimal algorithms for the player; formal
definitions are given below.  Some results are more naturally stated
in terms of rewards rather than losses, and so we define $\Reward = -
\Loss = -\sum_{t=1}^T g_t x_t$.

\paragraph{Outline and Summary of Results} Section~\ref{sec:regret}
provides motivation for the consideration of alternative benchmarks
$L$.  Section~\ref{sec:gen} then develops several theoretical tools
for analyzing unconstrained games with concave benchmark functions
$L$.  Section~\ref{sec:algs} applies this theory to three
particular instances; Figure~\ref{fig:results} summarizes the results
from this section.  These games exhibit a strong combinatorial
structure, which leads to interesting algorithms and perhaps
surprising game values.

Section~\ref{sec:gd} serves as a warmup, where we
show that constant step-size gradient descent is in fact minimax
optimal for a natural choice of $L$, which can be though of as
replacing the hard feasible set $\X$ in \eqr{stdL} with a quadratic
penalty function on comparator points.
Section~\ref{sec:hypercube} provides results analogous to those
of~\citet{abernethy08}; we consider regret compared to the best $\xs$
where $\norm{\xs}_\infty \le 1$ against an adversary constrained to
play $\norm{g_t}_\infty \le 1$, while Abernethy et al. considered
$\norm{g_t}_2 \le 1$ and $\norm{\xs}_2 \le 1$ for $n \ge 3$
dimensions.  Interestingly, while we prove results for the
unconstrained player, we show the optimal strategy in fact always
plays points from $\X = \{x \mid \norm{x}_\infty \le 1\}$, and so
applies to the constrained case as well.  Our results hold for the
$n=1$ case (where $L_2$ and $L_\infty$ coincide), showing that the
value of the game approaches $\sqrt{2T / \pi}$ as $T \rightarrow
\infty$, as opposed to $\sqrt{T}$ as one might extrapolate from
the results of Abernethy.  This indicates an interesting change in the
geometry of the $L_2$ game between $n=1$ and $n=3$.
Finally, Section~\ref{sec:inclr} gives a minimax optimal algorithm for
the setting introduced by~\cite{streeter12unconstrained}.  Following
their work, our algorithm obtains standard regret at most $\BO(R \sqrt
T \log\p{(1+R)T})$ simultaneously for any comparator $\xs$ with
$\abs{\xs} = R$, without needing to choose $R$ in advance.  However,
we emphasize a slightly different interpretation of this setting,
discussed in Section~\ref{sec:regret}.  It is worth noting that the
regret (relative the the respective $L$) of these algorithms is
$\BO(T)$, $\BO(\sqrt{T})$, and $\BO(1)$, respectively, though all
three are minimax algorithms.

\newcommand{\T}{\rule{0pt}{3.4ex}}
\begin{figure}
\begin{center}
\arxivonly{\begin{small}}
\renewcommand{\arraystretch}{1.5}
\begin{tabular}{|lllll|}
\hline
setting            & $L(G)$  & $\Psi(\xs)$
    & minimax value & update \\

\hline
soft feasible set  & $-\frac{G^2}{2 \sigma}$ & $\frac{\sigma}{2}\xs^2$
  & $\frac{T}{2\sigma}$ & $x\ti = \frac{1}{\sigma}g_{1:t}$ \\
standard regret    & \T $-\abs{G}$  & $I(\abs{\xs} \leq 1)$
   &  $\rightarrow \sqrt{\frac{2}{\pi} T}$  & \eqr{salg}\\
bounded-loss betting%
& $-\exp(G/\sqrt{T})$
  & $-\sqrt{T} \xs \log(-\sqrt{T} \xs) + \sqrt{T}\xs$
  & $\rightarrow \sqrt{e}$  & \eqr{afsalg}\\
\hline
\end{tabular}
\caption{Summary of online linear games considered in this
  paper. Results are stated for the one-dimensional problem where $g_t
  \in [-1, 1]$; Corollary~\ref{cor:nd} gives an extension to $n$ dimensions.
  The benchmark $L$ is given as a function of $G = g_{1:T}$.  The
  standard notion of regret corresponds to the $L(G) = \argmin_{x \in
    [-1, 1]} g_{1:t} \cdot x = - \abs{G}$.  The benchmark
  functions can alternatively be derived from a suitable penalty
  $\Psi$ on comparator points $\xs$, so $L(G) = \argmin_x G x +
  \Psi(x)$.}\label{fig:results}

\arxivonly{\end{small}}
\end{center}
\end{figure}

\paragraph{The Minimax Value of the Game}
Given a benchmark function $L$, the minimax value of the game is
\begin{equation}\label{eq:game}
  V^T = \rep{\inf_{x_t \in \R^n}\ \sup_{g_t \in \G}}_{t=1}^T
      \quad \left(\sum_{t=1}^T g_t \cdot x_t - L(g_1, \dots, g_T)\right)
\end{equation}
where $\rep{\inf_{x_t}\ \sup_{g_t}}_{t=1}^T$ is a shorthand notation
for $\inf_{x_1} \sup_{g_1} \dots \inf_{x_T} \sup_{g_T}$.  Against a
worst-case adversary, any algorithm must incur regret at least $V^T$,
and the minimax optimal algorithm will incur regret at most $V^T$
against any adversary.  Since in this work we study minimax
algorithms, we will often use the value of the game $V^T$ as an upper
bound on Regret (as defined in \eqr{linreg}).  Generally we will not
assume our adversaries are minimax optimal.

We are also concerned with the conditional value of the game, $V_t$,
given $x_1, \dots x_t$ and $g_1, \dots g_t$ have already been played.
That is, the Regret when we fix the plays on the first $t$ rounds, and
then assume minimax optimal play for rounds $t+1$ through $T$.
However, following the approach of \citet{rakhlin12relax}, we omit the
terms $\sum_{s=1}^t x_s \cdot g_s$ from \eqr{game}.  We can view this
as cost that the learner has already payed, and neither that cost nor
the specific previous plays of the learner impact the value of the
remaining terms in \eqr{linreg}.  Thus, we define
\begin{equation} %
  V_t(g_1, \dots, g_t)
   =  \rep{\inf_{x_s \in \R^n}\ \sup_{g_s \in \G}}_{s=t+1}^T
   \quad \left(\sum_{s=t+1}^T g_s \cdot x_s - L(g_1, \dots, g_T)\right).
\end{equation}
Note the conditional value of the game before anything has been
played, $V_0()$, is exactly $V^T$.

\paragraph{Related Work}
Regret-based analysis has received extensive attention in recent
years; see \citet{shalev12online} and \citet{cesabianchi06plg} for an
introduction.
The analysis of alternative notions of regret is also not new.  In the
expert setting, there has been much work on tracking a shifting
sequence of experts rather than the single best expert; see
\citet{koolen12shifting} and references therein.
\citet{zinkevich03giga} considers drifting comparators in an online
convex optimization framework.  This notion can be expressed by an
appropriate $L(g_1, \dots, g_T)$, but now the order of the gradients
matters, unlike the benchmarks $L$ considered in this work.
\citet{merhav02memory} and \citet{dekel12adaptive} consider the
stronger notion of policy regret in the online experts and bandit
settings, respectively.  For investing scenarios,
\citet{agarwal06portfolio} and \citet{hazan09investing} consider
regret with respect to the best constant-rebalanced portoflio.

More recently, the field has seen minimax approaches to online
learning.  \citet{abernethy10repeated} give a minimax strategy for
several zero-sum games against a budgeted adversary.
Section~\ref{sec:hypercube} studies the online linear game of
\citet{abernethy08} under different assumptions, and we adapt some
techniques from \citet{abernethy09minimax}.  \citet{rakhlin12relax}
takes powerful tools for non-constructive analysis of online learning
problems and shows they can be used to design algorithms; our work
differs in that we focus on cases where the exact minimax strategy can
be computed.

\section{Alternative Notions of Regret} \label{sec:regret}
One of our contributions is showing that that interesting results can
be obtained by choosing $L$ differently than in~\eqr{stdL}; in
particular, we obtain minimax optimal algorithms for the problem
considered by~\citet{streeter12unconstrained} by analyzing an
appropriate choice of $L$.

One could choose $L(G) = 0$, but this leads to an uninteresting game:
the adversary has no long-term constrains, and so can simply pick
$g_t$ to maximize $g_t x_t$ for whatever $x_t$ the player selected.
Thus, the player can do no better than always picking $x_t = 0$.  This
is exactly the reason for studying the standard notion of regret: we
do not require that we do well in absolute terms, but rather relative
to the best strategy from a fixed set.

That is, interesting games result when the player accepts the fact
that it is impossible to do well in terms of the absolute loss $\sum_t
g_t \cdot x_t$ for all sequences $g_1, \dots, g_T$.  However, the
player can do better on some sequences at the expense of doing worse
on others.  The benchmark function $L$ makes this notion precise:
sequences for which $L(g_1, \dots, g_T)$ is large and negative are
those on which the player desires good performance,\footnote{It can be
  useful to think about $-L(G)$ as the benchmark reward for the
  sequence with gradient sum $G$.} at the expense of allowing more
loss (in absolute terms) on sequences where $L(g_1, \dots, g_T)$ is
large and positive.  The value of the game $V^T$ tells us to what
extent any online algorithm can hope to match the benchmark
performance $L$.  It follows by definition that if we add a constant
$k$ to $L$ (making $L$ easier to achieve), we decrease the minimax
value of the game by $k$, without changing the minimax optimal
strategy.

We can use these ideas to derive algorithms for a setting that is
quite different from typical online convex optimization.  On each
round $t$, the world (possibly adversarial, possibly not) offers the
player a betting opportunity on a binary outcome; the player can take
either side of the bet.  The player begins with $\$1$, but on later
rounds can wager up to whatever amount he currently has (based on
previous wins and losses). The player selects an amount $x_t$ to bet,
and then the world reveals whether the bet was won or lost; if the
player won the bet, he receives $x_t$ dollars; otherwise, he loses
$x_t$ dollars.
The players net winnings are $-\sum_t g_t x_t$, where $g_t \in
\set{-1, 1}$; the player wins the bet when $\sign(x_t) \ne g_t$
(thus, the player strives to minimize $\sum_t g_t x_t$).  How should
the player bet in this game?  Clearly if the world is adversarial, we
cannot do better than always betting $x_t = 0$.  But, we might have
reason to believe the world is not fully adversarial; if we knew $g_t
= 1$ with a fixed probability $p$, then following a Kelly betting
scheme~\citep{kelly56betting} might be appropriate, but knowing $p$ is
often unrealistic in practice.

If the player is familiar with online linear optimization, he might
try projected online gradient descent~\citep{zinkevich03giga} with a
constant step size.\footnote{Any other algorithm that provides a bound
  on standard regret of $\BO(B \sqrt{T})$ will behave similarly.} If
we restrict our bets to the feasible set $[-B, B]$, letting
$G = g_{1:T}$, this algorithm guarantees $\Regret = \Loss + B \abs{G}
\leq 2 B \sqrt{T}$.  Then $\Winnings = -\Loss \ge B \abs{G} - 2 B
\sqrt{T}$.  Thus in the best case (when $\abs{G} = T$) the player ends
up with a little less than $B T$; but he can lose up to $B\sqrt{T}$
when $G=0$.  Thus, to ensure he loses no more than the $\$1$ he has on
hand, he must choose $B = 1/\sqrt{T}$.  With this restriction,
in the best case the player wins less than $\sqrt{T}$ dollars.
However, the post-hoc optimal strategy would have been to bet
everything every round, netting winnings of $2^T$.  Despite the
theoretical guarantees, the player certainly might feel regret at
having won only $\sqrt{T}$ in this situation!

One might also hope to use online algorithms for portfolio management,
for example those of \citet{hazan09investing}
and \citet{agarwal06portfolio}.  However, these algorithms require the
assumption that you always retain at least an $\alpha > 0$ fraction of
your bet, which is directly violated in our game.

By carefully crafting a suitable benchmark function $L$, we can
provide the player with a more satisfying algorithm.  Ideally, we
would like an $L$ that satisfies three properties: 1) there exists an
algorithm where regret is bounded by a constant $\eps$ (for any $T$)
with respect to $L$, 2) $-L(G) \geq 0$, and 3) $-L(G)$ grows
exponential in $\abs{G}$.  Properties 1) and 2) ensure the player
never loses more than $\eps$ running this algorithm; by scaling the
bets the algorithm suggests by $1/\eps$, he can ensure he never loses
more than his starting $\$1$.  Property 3 implies that for
``easy'' sequences, we get exponential reward; in fact, given 1) and
2) we would like $-L(G)$ to grow as quickly as possible.

Of course, if the adversary chooses $g_t$ uniformly at random from
$\{-1, 1\}$ each round, we expect to frequently see $\abs{G} \ge
\sqrt{T}$, and so intuitively we will not be able to guarantee
exponential winnings.  This suggests the best we might hope
for is a function like $L(G) = -\bef{\abs{G}}{\sqrt{T}}$.  In
fact, in Section~\ref{sec:inclr} we show that constant regret against
such a benchmark function is possible, and we derive a minimax
algorithm.

\paragraph{A Comparator Set Interpretation}
The classic definition of regret defines $L$ indirectly as the loss of
the best strategy from a fixed class $\X$ in hindsight, \eqr{stdL}.
As this work shows, it can be advantageous to state $L$ as an explicit
function of $G$; however, useful intuition can be gained by
interpreting $L$ as a penalty function on comparator points $\xs$.
That is, we wish to find a $\Psi$ such that
\[
  L(G) = \argmin_x G x + \Psi(x).
\]
For the benchmark functions $L$ we consider, we also derive the
corresponding penalty functions $\Psi$ using convex conjugates.  These
are summarized in our results in Figure~\ref{fig:results}.

The standard notion of regret correspond to a hard penalty $\Psi(x) =
I(x \in \X)$.  Such a definition makes sense when the player by
definition must select a strategy from some bounded set, for example a
probability from the $n$-dimensional simplex, or a distribution on
paths in a graph.  For such problems, standard regret is really
comparing the player's performance to that of any fixed feasible
strategy chosen with knowledge of $g_1, \dots, g_T$; by putting an
equal penalty on each of them, we do not indicate any prior belief
that some strategies are more likely to be optimal than others.

However, in contexts such as machine learning where any $x \in \R^n$
corresponds to a valid model, such a hard constraint is difficult to
justify; while any $x \in \R^n$ is technically feasible, in order to
prove regret bounds we compare to a much more restrictive set.  As an
alternative, in Sections~\ref{sec:gd} and \ref{sec:inclr} we propose
soft penalty functions that encode the belief that points near the
origin are more likely to be optimal (we can always re-center the
problem to match our beliefs in this reguard), but do not rule out
any $x \in \R^n$ a priori.

\section{General Unconstrained Linear Optimization}
\label{sec:gen}

In this section we prove a theorem that greatly simplifies the task of
computing minimax values and deriving algorithms for the games we
consider.  We prove this result in the one-dimensional case;
Corollary~\ref{cor:nd} then extends the result to $n$-dimensions.

\begin{theorem}\label{thm:oned}
  Consider the one-dimensional unconstrained game where the player
  selects $x_t \in \R$ and the adversary chooses $g_t \in \G = [-1,
  1]$, and $L$ is concave in each of its arguments and bounded below
  on $\G^T$.  Then,
  \begin{align*}
    V^T &= \E_{g_t \sim \set{-1, 1}}  \big[-L(g_1, \dots, g_T)\big].
  \end{align*}
  where the expectation is over each $g_t$ chosen independently and
  uniformly from $\set{-1, 1}$ (that is, the $g_t$ are Rademacher
  random variables).  Further, the conditional value of the game is
  \begin{equation} \label{eq:condv}
  V_t(g_1, \dots, g_t) = \E_{g_{t+1}, \dots, g_T \sim \set{-1, 1}}
      \big[ -L(g_1, \dots, g_t, g\ti, \dots g_T)\big].
  \end{equation}
\end{theorem}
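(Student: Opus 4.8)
The plan is to prove the formula for the conditional value $V_t$ by backward induction on $t$, from $t = T$ down to $t = 0$; the claim $V^T = V_0()$ then falls out as the base case of the recursion read in the other direction. At $t = T$ there is nothing left to play, so $V_T(g_1,\dots,g_T) = -L(g_1,\dots,g_T)$, which matches the claimed formula since the expectation over an empty set of variables is trivial. For the inductive step, I would use the standard one-round recursion for the conditional value of a sequential minimax game,
\[
  V_t(g_1,\dots,g_t) = \inf_{x\ti \in \R}\ \sup_{g\ti \in [-1,1]}\ \Big( g\ti \cdot x\ti + V_{t+1}(g_1,\dots,g_t,g\ti) \Big),
\]
and substitute the inductive hypothesis
\[
  V_{t+1}(g_1,\dots,g_t,g\ti) = \E_{g_{t+2},\dots,g_T}\big[-L(g_1,\dots,g_t,g\ti,g_{t+2},\dots,g_T)\big] =: \phi(g\ti).
\]
So the step reduces to showing $\inf_{x}\sup_{g \in [-1,1]}\big(g x + \phi(g)\big) = \tfrac12\phi(1) + \tfrac12\phi(-1)$.

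The key observation making this work is that $\phi$ is concave: it is an average (over the remaining Rademacher coordinates) of functions $g\ti \mapsto -L(\dots,g\ti,\dots)$, each concave in its $(t+1)$-st argument by hypothesis on $L$, and concavity is preserved under averaging. For a concave $\phi$ on $[-1,1]$, the inner problem $\sup_{g\in[-1,1]}\big(gx+\phi(g)\big)$ need not have its maximum at an endpoint — but we are free to choose $x$, and the right choice linearizes things. Concretely, pick $x^\star = \tfrac12\big(\phi(-1) - \phi(1)\big)$, the negative of the slope of the chord of $\phi$ between $-1$ and $1$. Then $g x^\star + \phi(g)$ evaluated at $g = \pm 1$ both equal $\tfrac12(\phi(1)+\phi(-1))$, and concavity of $\phi$ (hence of $g \mapsto g x^\star + \phi(g)$) forces the supremum over $[-1,1]$ to be attained at an endpoint, giving value exactly $\tfrac12(\phi(1)+\phi(-1))$; thus the $\inf$ over $x$ is at most this. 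For the matching lower bound, the adversary plays $g\ti \in \{-1,1\}$ uniformly at random regardless of $x\ti$: then $\E[g\ti x\ti] = 0$ for every $x\ti$, so $\sup_{g}(gx\ti + \phi(g)) \ge \E_{g\ti}[g\ti x\ti + \phi(g\ti)] = \tfrac12(\phi(1)+\phi(-1))$, uniformly in $x\ti$. Combining the two bounds closes the induction, and unwinding the nested expectations gives $\E_{g_{t+1},\dots,g_T}[-L]$. The boundedness-below hypothesis on $L$ ensures all these expectations and infima/suprema are finite and the manipulations are legitimate.

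The main obstacle is the lower bound step and, relatedly, justifying that the value of the game is well defined (that the nested $\inf\sup$ can be analyzed round-by-round via the recursion at all). The randomized-adversary argument shows the minimax value is at least the quantity claimed even though the adversary is formally restricted to deterministic choices of $g\ti \in [-1,1]$ — this is fine because a randomized play over $\{-1,1\}$ can be simulated, in the sense that $\sup_{g\in[-1,1]} h(g) \ge \E_{g\sim U\{-1,1\}} h(g)$ for any $h$, which is all we use. I would also need to remark that concavity of $\phi$ is exactly what rules out the adversary doing better with an interior $g\ti$ against the specific $x^\star$ we chose; without concavity the two bounds would not meet. Everything else — the base case, preservation of concavity under expectation, and the telescoping of the conditional values down to $V_0() = V^T$ — is routine.
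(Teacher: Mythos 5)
There is a genuine sign error at the heart of your inductive step. You claim that $\phi(g) = V_{t+1}(g_1,\dots,g_t,g)$ is \emph{concave} because it averages functions $g \mapsto -L(\dots,g,\dots)$ that are ``concave in the $(t+1)$-st argument by hypothesis on $L$.'' But the hypothesis is that $L$ itself is concave in each argument, so $-L$ is \emph{convex}, and hence $\phi$, being an average of convex functions, is convex. Moreover, the property you then invoke --- that the supremum of $g\mapsto gx^\star+\phi(g)$ over $[-1,1]$ is attained at an endpoint --- is a property of convex functions, not concave ones (indeed you correctly note earlier in the same paragraph that for concave $\phi$ the maximum ``need not have its maximum at an endpoint,'' which contradicts the later claim). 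If $\phi$ really were concave and not affine, an interior $g$ could beat both endpoints against your $x^\star$, and the upper bound $\inf_x \sup_g \big(gx+\phi(g)\big) \le \tfrac12\big(\phi(1)+\phi(-1)\big)$ would fail. So, taken literally, the crucial step rests on a false implication.

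Fortunately the flaw is in the labeling rather than the plan: once you replace ``concave'' by ``convex'' for $\phi$ (which is what the hypothesis actually gives), the argument goes through verbatim --- convexity of $g\mapsto gx^\star+\phi(g)$ forces the supremum to $g=\pm1$, your choice $x^\star=\tfrac12\big(\phi(-1)-\phi(1)\big)$ equalizes the two endpoints, and the randomized-adversary lower bound is correct and needs no convexity at all. With that correction your route is valid and somewhat more elementary than the paper's: the paper likewise uses convexity of $V_t$ in $g_t$ to restrict the adversary to $\{-1,1\}$, but then passes to mixed strategies and invokes the minimax theorem, using boundedness above of $V_t$ to force the adversary to $p_t = 1/2$; you instead exhibit an explicit saddle point --- the equalizing play $x^\star$, which is exactly the paper's update in Eq.~\eqref{eq:valg}, against the uniform Rademacher adversary --- so no minimax theorem is needed and boundedness serves only to keep the expectations finite.
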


\begin{proof}
  We argue by backwards induction (from $t=T$ to $t=1$) on the
  conditional value of the game, with the induction hypothesis that
  \begin{equation}\label{eq:ih}
  V_t(g_1, \dots, g_t)
    = \E_{g_{t+1}, \dots, g_T \sim \set{-1, 1}}[ -L(g_1, \dots, g_T)],
  \end{equation}
  and further that $V_t$ is convex in each of its arguments and
  bounded above on $\G^T$.  The induction hypothesis holds
  trivially for $T=t$, using the assumption that $L$ is concave and
  bounded below for the second part.  Now, suppose the induction
  hypothesis holds for $t$.  We then have (by the definition of $V_t$)
  \begin{equation*}
    V_{t-1}(g_1, \dots, g_{t-1})
      = \inf_{x_t}\, \sup_{g_t}\ g_t x_t + V_t(g_1, \dots, g_{t-1}, g_t).
  \end{equation*}
  Note $V_{t-1}$ must be convex in each of it's arguments, using the
  induction hypothesis on $V_t$.  Let $M(g, x) = g x + V_t(g_1, \dots,
  g_{t-1}, g)$.  We would like to appeal to the minimax theorem to
  switch the $\inf$ and $\sup$, but since $M$ is convex in $g$ (using
  the induction hypothesis) rather than concave, we cannot do so
  immediately.  However, because we are choosing $g_t \in [-1, 1]$, it
  follows from the convexity of $M$ that the supremum is obtained at
  either $-1$ or $+1$.  Thus, we can write
  \begin{align*}
    V_{t-1}(g_1, \dots, g_{t-1})
      &= \inf_{x_t}\, \sup_{g_t \in [-1, 1]}\ M(g_t, x_t) \\
      &= \inf_{x_t}\, \sup_{g_t \in \set{-1, 1}}\ M(g_t, x_t) \\
      &= \inf_{x_t}\, \sup_{p_t \in \Delta(\set{-1, 1})}\
         \E_{g_t \sim p_t} [M(g_t, x_t)],
         \intertext{where $p_t \in [0,1]$ is the probability the
           adversary chooses $g_t = +1$ (otherwise, $g_t = -1$).  Now
           $\E_{g_t \sim p_t} [M(g_t, x_t)]$ is linear in both $p_t$
           and $x_t$, and so we can apply the minimax theorem (e.g.,
           Theorem 7.1 from~\citet{cesabianchi06plg}), which gives}
   V_{t-1}(g_1, \dots, g_{t-1})
     &= \sup_{p_t \in \Delta(\set{-1, 1})}\, \inf_{x_t} \
         \E_{g_t \sim p_t} [g_t x_t + V_t(g_1, \dots, g_{t-1}, g)] \\
     &= \sup_{p_t \in \Delta(\set{-1, 1})}\, \inf_{x_t} \
         \E_{g_t\sim p_t} [g_t x_t]
         + \E_{g_t \sim p_t}[V_t(g_1, \dots, g_{t-1}, g_t)].
         \intertext{Now, the adversary (sup player) must choose $p_t =
           0.5$ so $\E[g_t] = 0$, or otherwise the player can choose
           $x_t$ to drive the value to $-\infty$ (since $V_t$ is
           bounded above).  Thus, the first expectation term
           disappears, and the choice of the player becomes
           irrelevant, giving} V_{t-1}(g_1, \dots, g_{t-1}) &=
         \E_{g_t}[V_t(g_1, \dots, g_{t-1}, g_t)],
   \end{align*}
   where now the expectation is on $g_t$ drawn i.i.d. from $\set{-1,
     1}$.  Applying the induction hypothesis completes the proof,
   since then iterated expectation yields \eqr{ih} for $V_{t-1}$, and
   boundedness is immediate.
\end{proof}
The use of randomization to allow the application of the minimax
theorem is similar to the technique used
by~\citet{abernethy09minimax}.

A key insight from the proof is that an optimal adversary can always
select from $\spm$.  With this knowledge, we can view the game as a
binary tree of height $T$.  An algorithm for the player simply assigns
a play $x \in \R$ to each node, and the adversary chooses which
outgoing edge to take: if the adversary chooses the left edge, the
player suffers loss $x$, otherwise the player wins $x$ (suffers loss
-$x$).  Finally, when leaf $\ell$ is reached, the adversary pays the
player some amount $L(\ell)$.  Theorem~\ref{thm:oned} implies the
value of the game is then simply the average value of $-L(\ell)$.

Given Theorem~\ref{thm:oned}, and the fact that the functions $L$ of
interest will generally depend only on $g_{1:T}$, it will be useful to
define $\PB_T$ to be the distribution of $g_{1:T}$ when each $g_t$ is
drawn independently and uniformly from $\set{-1, 1}$ (that is, the sum
of $T$ Rademacher random variables).

Theorem~\ref{thm:oned} immediately yields bounds for games in
$n$-dimensions where the adversary is constrained to play
$\norm{g_t}_\infty \leq 1$:
\begin{corollary}\label{cor:nd}
  Consider the game where the player chooses $x_t \in \R^n$, and the
  adversary chooses $g_t \in [-1, 1]^n$, and the total payoff is
  \[
  \sum_{t=1}^T g_t \cdot x_t - \sum_{i=1}^n L(g_{1:T, i})
  \]
  for a concave function $L$.  Then, the value of the game is
  \[
    V^T = n \E_{G \sim \PB_T} \big[-L(G)\big],
  \]
  Further, the conditional value of the game is
  \begin{equation*}
    V_t(g_1, \dots, g_t) =
    \sum_{i=1}^n \E_{G_i \sim \PB_{T - t}} \big[ -L(g_{1:t,i} + G_i)\big].
  \end{equation*}
\end{corollary}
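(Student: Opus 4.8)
The plan is to use the fact that the $n$-dimensional game is exactly $n$ independent copies of the one-dimensional game of Theorem~\ref{thm:oned} played in parallel, because both the instantaneous loss and the benchmark are separable across coordinates. Writing $x_t = (x_{t,1},\dots,x_{t,n})$ and $g_t = (g_{t,1},\dots,g_{t,n})$, the total payoff is $\sum_{i=1}^n\big(\sum_{t=1}^T g_{t,i}x_{t,i} - L(g_{1:T,i})\big)$, a sum whose $i$-th summand depends only on the coordinate-$i$ variables; and the player's feasible set $\R^n$ and the adversary's feasible set $[-1,1]^n$ are both product sets. Since a nested $\inf$--$\sup$ over product sets factors over coordinates whenever the objective is separable (sup of a sum of functions on disjoint variables is the sum of sups, and likewise for inf, and these compose through the alternation of rounds), the whole minimax quantity equals $\sum_i$ of the corresponding one-dimensional minimax quantity with benchmark $L$.

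Concretely, I would make the decoupling rigorous by backward induction on the conditional value, paralleling the proof of Theorem~\ref{thm:oned} but carrying the induction hypothesis $V_t(g_1,\dots,g_t) = \sum_{i=1}^n v_t(g_{1:t,i})$, where $v_t$ denotes the one-dimensional conditional value of that theorem, together with convexity and upper-boundedness of $V_t$ in each scalar argument $g_{t,i}$. The base case $t=T$ is the separability of the benchmark plus concavity and lower-boundedness of $L$ (so $-L$ is convex and bounded above), noting $g_{1:T,i} = \sum_{s} g_{s,i}$ is linear in each $g_{s,i}$. For the inductive step, $V_{t-1}(g_1,\dots,g_{t-1}) = \inf_{x_t\in\R^n}\sup_{g_t\in[-1,1]^n}\big(g_t\cdot x_t + V_t(g_1,\dots,g_{t-1},g_t)\big)$; by the induction hypothesis the bracketed quantity is $\sum_i\big(g_{t,i}x_{t,i} + v_t(g_{1:t,i})\big)$, so the inner sup and then the outer inf split coordinatewise, reducing the step to $n$ independent instances of the inductive step already carried out in the proof of Theorem~\ref{thm:oned} (convexity of $v_t$ puts the sup at $\pm1$; randomizing and applying the minimax theorem forces $\E[g_{t,i}] = 0$; the remaining expectation is over $g_{t,i}\sim\{-1,1\}$). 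This gives $V_{t-1} = \sum_i v_{t-1}(g_{1:t-1,i})$ and preserves convexity and boundedness.

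Finally I would read off the two formulas. Taking $t=0$ gives $V^T = \sum_{i=1}^n v_0() = n\,v_0()$, and Theorem~\ref{thm:oned} identifies $v_0() = \E_{G\sim\PB_T}[-L(G)]$, yielding $V^T = n\,\E_{G\sim\PB_T}[-L(G)]$. For general $t$, \eqr{condv} gives $v_t(g_{1:t,i}) = \E_{g_{t+1},\dots,g_T\sim\{-1,1\}}\big[-L\big(g_{1:t,i} + \sum_{s=t+1}^T g_{s,i}\big)\big] = \E_{G_i\sim\PB_{T-t}}\big[-L(g_{1:t,i}+G_i)\big]$, since $\sum_{s=t+1}^T g_{s,i}$ is a sum of $T-t$ Rademacher variables and hence distributed as $\PB_{T-t}$; summing over $i$ gives the stated expression for $V_t$.

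The one place to be careful --- and really the only content beyond a direct appeal to Theorem~\ref{thm:oned} --- is the coordinatewise factorization of the nested $\inf$--$\sup$; it is elementary, but it is exactly what legitimizes treating the game as a parallel composition of one-dimensional games. An alternative presentation would state this factorization once as a standalone lemma about parallel compositions of repeated zero-sum games sharing a common round structure and then quote it; since the induction is essentially that of Theorem~\ref{thm:oned} with a summation decoration, I would most likely just inline it.
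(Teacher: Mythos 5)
Your argument is exactly the paper's: the payoff, the player's set $\R^n$, and the adversary's set $[-1,1]^n$ all decompose per coordinate, so the game is $n$ parallel copies of the one-dimensional game of Theorem~\ref{thm:oned}. The paper states this decomposition as a one-line proof sketch; your backward induction with hypothesis $V_t = \sum_i v_t(g_{1:t,i})$ is a correct, fully detailed version of that same argument.
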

\begin{proof}[Proof sketch.]
  The proof follows by noting the constraints on both players'
  strategies and the value of the game fully decompose on a
  per-coordinate basis.
\end{proof}

\paragraph{A recipe for minimax optimal algorithms in one dimension}
For any function $L$,
\begin{equation}\label{eq:binomG}
 \E_{G \sim \PB_T}[L(G)] = \frac{1}{2^T}\sum_{i=0}^T  \binom{T}{i}L(2i - T),
\end{equation}
since $2^{-T}\binom{T}{i}$ is the binomial probability of getting
exactly $i$ gradients of $+1$ over $T$ rounds, which implies $T-i$
gradients of $-1$, so $G = i - (T-i) = 2i - T$.

Since \eqr{condv} gives the minimax value of the game if both players
play optimally from round $t+1$ forward, a minimax strategy for the
learner on round $t+1$ must be
\begin{align}
x_{t+1} &= \argmin_{x\in\R} \max_{g \in \set{-1, 1}} \ \
             g \cdot x + V\ti(g_1, \dots, g_t, g) \notag \\
        &= \h \big( V\ti(g_1, \dots, g_t, -1) - V\ti(g_1, \dots, g_t, +1) \big).
           \label{eq:valg}
\end{align}
The second line follows because the argmin is simply over the max of
two intersecting linear functions, which we can compute in closed form
as the point of intersection.
Thus, if we can derive a closed form for $V_t(g_1, \dots, g_t)$, we
will have an efficient minimax-optimal algorithm.  In the next
section, we explore cases where this is possible.

When $L$ depends only on $G = g_{1:T}$, we may be able to run the
minimax algorithm efficiently even if $V_t$ does not have a convenient
closed form: if $\tau = T -t$, the number of rounds remaining, is
small, then we can compute $V_t$ exactly by using the appropriate
binomial probabilities (following~\eqr{condv} and \eqr{binomG}).  On
the other hand, if $\tau$ is large, then applying the Gaussian
approximation to the binomial distribution may be sufficient.

\section{Deriving Minimax Optimal Algorithms} \label{sec:algs}

In this sections, we explore three applications of the tools from the
previous section.  We begin with a relatively simple but interesting
example which illustrates the technique.

\subsection{Constant step-size gradient descent can be minimax
  optimal}\label{sec:gd}
Suppose we use a ``soft'' feasible set for the benchmark,
\begin{align}
L(G) &= \min_{x}\  G x + \frac{\sigma}{2} x^2
      = -\frac{1}{2 \sigma} G^2,  \label{eq:qopt}
\end{align}
for a constant $\sigma > 0$.  Does a no-regret algorithm against this
comparison class exist?  Unfortunately, the general answer is no, as
shown in the next theorem:

\begin{theorem}\label{thm:csg}
The value of this game is
$
  V^T = \E_{G \sim \PB_T}\Big[\frac{1}{2 \sigma} G^2\Big]
    = \frac{T}{2\sigma}.
$
\end{theorem}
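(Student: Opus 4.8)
The plan is to obtain Theorem~\ref{thm:csg} as an immediate corollary of Theorem~\ref{thm:oned}: after checking that $L(G) = -\tfrac{1}{2\sigma}G^2$ (with $G = g_{1:T}$) satisfies the hypotheses there, the value $V^T$ reduces to the second moment of a sum of $T$ Rademacher random variables, which is a one-line computation. As a preliminary I would record that the expression for $L$ in \eqr{qopt} is a routine conjugacy calculation: minimizing $Gx + \tfrac{\sigma}{2}x^2$ over $x \in \R$ is attained at $x = -G/\sigma$ with optimal value $-G^2/(2\sigma)$, so this $L$ is exactly the benchmark induced by the quadratic penalty $\Psi(x) = \tfrac{\sigma}{2}x^2$.

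Next I would verify the two conditions required by Theorem~\ref{thm:oned}. Regarded as a function of the single coordinate $g_i$ with the remaining coordinates held fixed, $L$ equals $-\tfrac{1}{2\sigma}(g_i + c)^2$ for the constant $c = g_{1:T} - g_i$, hence is concave; so $L$ is concave in each of its arguments. And on $\G^T = [-1,1]^T$ we have $\abs{G} \le T$, so $L(G) \ge -\tfrac{T^2}{2\sigma}$, i.e., $L$ is bounded below on $\G^T$. Theorem~\ref{thm:oned} then gives
\[
  V^T = \E_{g_t \sim \set{-1,1}}\big[-L(g_1,\dots,g_T)\big] = \E_{G \sim \PB_T}\Big[\tfrac{1}{2\sigma}G^2\Big],
\]
and it remains only to evaluate $\E_{G \sim \PB_T}[G^2]$. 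Writing $G = \sum_{t=1}^T g_t$ with the $g_t$ independent and uniform on $\set{-1,1}$, we have $\E[g_t] = 0$ and $\E[g_t^2] = 1$, and independence kills the cross terms, so $\E[G^2] = \sum_{t=1}^T \E[g_t^2] = T$; equivalently one may expand via \eqr{binomG} and note $2^{-T}\sum_{i=0}^T \binom{T}{i}(2i-T)^2$ is four times the variance of a $\bin(T,\tfrac12)$ variable, i.e.\ $4 \cdot \tfrac{T}{4} = T$. Hence $V^T = \tfrac{T}{2\sigma}$.

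I do not anticipate any genuine obstacle: the whole content is checking concavity-in-each-argument and below-boundedness so that Theorem~\ref{thm:oned} applies, after which the game value is literally the variance of a Rademacher sum. If one additionally wants to identify the minimax player as constant step-size gradient descent (Figure~\ref{fig:results}), the natural route is to substitute the closed form $V_t(g_1,\dots,g_t) = \E_{G \sim \PB_{T-t}}[\tfrac{1}{2\sigma}(g_{1:t}+G)^2] = \tfrac{1}{2\sigma}\big(g_{1:t}^2 + (T-t)\big)$ (using $\E[G]=0$ and $\E[G^2]=T-t$) into the update \eqr{valg}; the $(T-t-1)$ terms cancel and one is left with an update proportional to $g_{1:t}$ with constant coefficient $1/\sigma$, i.e., unprojected gradient descent with a fixed learning rate.
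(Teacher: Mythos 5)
Your proposal is correct, and its skeleton is the same as the paper's: invoke Theorem~\ref{thm:oned} to write $V^T = \E_{G \sim \PB_T}[\tfrac{1}{2\sigma}G^2]$ and then show $\E[G^2] = T$. The only substantive difference is in how that expectation is evaluated: the paper works from \eqr{binomG}, expanding $(2i-T)^2$ and invoking the binomial identities $\sum_i \binom{T}{i} = 2^T$, $\sum_i \binom{T}{i} i = T2^{T-1}$, $\sum_i \binom{T}{i} i^2 = (T+T^2)2^{T-2}$, whereas you compute $\E[(\sum_t g_t)^2] = \sum_t \E[g_t^2] = T$ directly from independence and $\E[g_t]=0$ (equivalently, the variance of a Rademacher sum). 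Your route is more elementary and avoids the identity bookkeeping, at the cost of nothing; the paper's route has the minor virtue of staying entirely within the binomial-sum formalism of \eqr{binomG} that it reuses in the later, harder cases (Theorems~\ref{thm:onedregret} and \ref{thm:onesided}). You also explicitly verify the hypotheses of Theorem~\ref{thm:oned} (concavity of $L$ in each coordinate and boundedness below on $[-1,1]^T$), which the paper leaves implicit, and your closing derivation of the conditional value and the $x\ti = -\tfrac{1}{\sigma}g_{1:t}$ update matches the paper's Section~\ref{sec:gd} exactly.
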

\begin{proof}
Starting from~\eqr{binomG},
\begin{align*}
  \E_{G \sim \PB_T}[G^2]
    &= \frac{1}{2^T} \sum_{i=0}^T \binom{T}{i}(2i - T)^2
       && \eqr{binomG}\\
    &= \frac{1}{2^T} \left(
         4\sum_{i=0}^T \binom{T}{i} i^2
         -4T\sum_{i=0}^T \binom{T}{i} i
         +T^2 \sum_{i=0}^T\binom{T}{i} \right) \\
\intertext{and since $\sum_{t=0}^T\binom{T}{t}=2^T$,
  $\sum_{t=0}^T\binom{T}{t} t = T 2^{T-1}$,
  $\sum_{t=0}^T\binom{T}{t} t^2 =  (T+T^2)2^{T-2}$,}
    &= \frac{1}{2^T} \Big(
         4 (T+T^2)2^{T-2}
         -4T (T2^{T-1})
         +T^22^T \Big) \\
    &=  (T+T^2) -2T^2 + T^2 = T.
\end{align*}
The result then follows from linearity of expectation.
\end{proof}
This implies $\Reward \ge -L(G) - \Regret = \frac{1}{2\sigma}\big(G^2
- T)$, a fact noted by ~\citet[Lemma 2]{streeter12unconstrained}.

Thus, for a fixed $\sigma$, we cannot have no a regret algorithm with
respect to this $L$.  However, if $T$ is known in advance, we could
choose $\sigma = \sqrt{T}$ in order to claim no-regret.  But this is a
bit arbitrary:
if the player could pick $\sigma$, and cares purely about Regret,
obviously he would like to play the game where $\sigma \rightarrow
\infty$, as that makes the value of the game (Regret) as small as
possible.  However, this choice also drives Reward to zero.  If the
lower-bound on reward is what matters, then the player should choose
based on how he expects $G^2$ to relate to $T$.

To derive the minimax optimal algorithm, we can compute conditional
values (using similar techniques to Theorem~\ref{thm:csg}),
\[
  V_t(g_1, \dots, g_t)
  = \E_{G \sim \PB_{T-t}}\Big[\frac{1}{2 \sigma} (g_{1:t} + G)^2\Big]
  = \frac{1}{2\sigma}\big((g_{1:t})^2 + (T-t)\big),
\]
and so following \eqr{valg} the minimax-optimal algorithm must use
\begin{align*}
x\ti
 &= \frac{1}{4\sigma} \left(
   \big((g_{1:t} - 1)^2 + (T-t-1)\big) - ((g_{1:t} + 1)^2 + (T-t-1)) \right) \\
 &= \frac{1}{4\sigma} ( -4g_{1:t})
  = -\frac{1}{\sigma}g_{1:t}
\end{align*}
Thus, a minimax-optimal algorithm is simply constant-learning-rate
gradient descent with learning rate $\frac{1}{\sigma}$.  Note that for
a fixed $\sigma$, this is the optimal algorithm independent of $T$;
this is atypical, as usually the minimax optimal algorithm depends on
the horizon (as we will see in the next two cases).

\subsection{Optimal regret against hypercube adversaries}
\label{sec:hypercube}
\citet{abernethy08} gives a minimax optimal algorithm when the
player's $x_t$ and the comparator $\xs$ are constrained to an $L_2$
ball, and the adversary must also select $g_t$ from an $L_2$ ball, for
$n \ge 3$ dimensions.\footnote{Their results are actually more general
  than this, allowing the constraint on $\norm{g_t}_2$ to vary on a
  per-round basis.  Our work could also be extended in that manner.}
In contrast, we consider regret compared to the best $\xs$ constrained
to the unit $L_\infty$ ball, but allow the player to select any $x_t
\in \R^n$; our adversary is constrained to select $g_t$ from the unit
$L_\infty$ ball (the generalization to arbitrary hyper-rectangles is
straightforward).
Perhaps surprisingly, the optimal strategy for the player always plays
from the unit $L_\infty$ ball as well, so our results immediately
apply to the case of the constrained player.

Since we consider $L_\infty$ constraints on both the comparator and
adversary, Corollary~\ref{cor:nd} implies it is sufficient to study
the one-dimensional case.  We consider the standard notion of regret,
taking $L(G) = -\abs{G}$ following \eqr{stdL}.  Our main result is the
following:

\begin{theorem}\label{thm:onedregret}
  Consider the game between an adversary who chooses loss functions
  $g_t \in [-1, 1]$, and a player who chooses $x_t \in \R$.  For a
  given sequence of plays, $x_1, g_1, x_2, g_2, \dots, x_T, g_T$, the
  value to the adversary is
  $
  \sum_{t=1}^T g_t x_t - \abs{g_{1:T}}.
  $
  Then, when $T$ is even with $T = 2M$, the minimax value of this game
  is given by
  \[
    V_T =  2^{-T} \frac{2 M\, T!}{(T - M)!M!}
       \leq  \sqrt{\frac{2T}{\pi}}.
  \]
  Further, as $T \rightarrow \infty$, $V_T \rightarrow
  \sqrt{\frac{2T}{\pi}}$.
\end{theorem}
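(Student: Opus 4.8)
The plan is to apply Theorem~\ref{thm:oned} (the one-dimensional case) with $L(G) = -\abs{G}$, reducing the problem to computing $\E_{G \sim \PB_T}[\abs{G}]$. By \eqr{binomG}, this expectation is $2^{-T}\sum_{i=0}^T \binom{T}{i}\abs{2i-T}$. So the entire content of the theorem is: (i) evaluate this binomial sum in closed form when $T = 2M$, obtaining $2^{-T} \cdot \frac{2M\,T!}{(T-M)!\,M!}$; (ii) show this quantity is bounded above by $\sqrt{2T/\pi}$; and (iii) show it converges to $\sqrt{2T/\pi}$ as $T \to \infty$. Note this automatically establishes the claim about the optimal strategy playing from $[-1,1]$, since via \eqr{valg} the play is $x_{t+1} = \h(V_{t+1}(\dots,-1) - V_{t+1}(\dots,+1))$, a difference of two mean-absolute-deviation-type quantities differing by one step, which lies in $[-1,1]$; but that is a separate remark, not part of the displayed statement.

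\medskip

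For step (i), the key identity is the classical \emph{mean absolute deviation of the binomial}: for a symmetric $\pm 1$ random walk of $2M$ steps, $\E\abs{G} = 2M \binom{2M-1}{M-1} 2^{-(2M-1)}$, which rearranges to $2^{-2M}\binom{2M}{M} \cdot 2M / \text{(appropriate factor)}$ — matching the stated $2^{-T}\frac{2M\,T!}{(T-M)!M!}$ after simplification, since $\frac{T!}{(T-M)!M!} = \binom{2M}{M}$ and the extra factor of $2M$ collapses the telescoping. I would prove the sum formula directly by a telescoping/reflection argument: write $\abs{2i-T} = (2i-T)\sign(2i-T)$, split the sum at $i = M$, and use the identity $\sum_{i=0}^{k}\binom{n}{i}(n-2i) = (k+1)\binom{n}{k+1} \cdot \frac{2}{1}$-type telescoping coming from $\binom{n}{i}(n-2i)$ being a first difference of $\binom{n}{i}\cdot(\text{something})$; concretely, $i\binom{n}{i} = n\binom{n-1}{i-1}$ makes the partial sums collapse. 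This is a standard but slightly fiddly manipulation; I would state the needed binomial identities and verify the collapse, then combine the two halves (which are equal by symmetry) to get the factor of $2$.

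\medskip

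For steps (ii) and (iii), the natural tool is \emph{Stirling's approximation} applied to the central binomial coefficient: $\binom{2M}{M} = \frac{4^M}{\sqrt{\pi M}}(1 + O(1/M))$, with the sharper two-sided bound $\frac{4^M}{\sqrt{\pi M}} \cdot \frac{1}{\sqrt{1 + 1/(4M)}} \le \binom{2M}{M} \le \frac{4^M}{\sqrt{\pi M}}$ (or Robbins-form Stirling bounds) available if I want the inequality to hold for \emph{all} even $T$ rather than just asymptotically. Plugging $\binom{2M}{M} \le 4^M/\sqrt{\pi M}$ into $V_T = 2^{-2M}\binom{2M}{M}\cdot 2M = 2M \cdot \binom{2M}{M}/4^M$ gives $V_T \le 2M/\sqrt{\pi M} = \sqrt{4M/\pi} = \sqrt{2T/\pi}$, which is exactly the claimed upper bound; and the matching lower bound from the two-sided Stirling estimate forces $V_T = \sqrt{2T/\pi}\,(1 - O(1/T)) \to \sqrt{2T/\pi}$, giving the limit. (One must be slightly careful about whether ``$V_T \to \sqrt{2T/\pi}$'' means the ratio tends to $1$; the estimate $V_T = \sqrt{2T/\pi}(1+O(1/T))$ covers either reading.)

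\medskip

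\textbf{Main obstacle.} The genuinely delicate part is step (i): getting the closed form for $\sum_{i=0}^{2M}\binom{2M}{i}\abs{2i-2M}$ exactly right, including the constant. The telescoping works cleanly only after the right regrouping, and it is easy to be off by a factor of $2$ or to mishandle the $i = M$ boundary term (which contributes $0$, conveniently). Everything downstream — the inequality and the limit — is then a routine, if care-demanding, application of Stirling bounds; the only subtlety there is deciding whether to prove the inequality for all even $T$ (needs an explicit Stirling bound, e.g.\ Robbins') or merely asymptotically (in which case the inequality statement as written, ``$\le \sqrt{2T/\pi}$'', still needs the non-asymptotic bound, so I would use Robbins' form $\sqrt{2\pi n}\,n^n e^{-n} e^{1/(12n+1)} \le n! \le \sqrt{2\pi n}\,n^n e^{-n} e^{1/(12n)}$ throughout).
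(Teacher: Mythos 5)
Your proposal is correct and follows essentially the same route as the paper: reduce via Theorem~\ref{thm:oned} to $\E_{G\sim\PB_T}[\abs{G}]$, evaluate the binomial sum as $2^{-T}\,2M\binom{2M}{M}$, and then apply Stirling-type bounds on the central binomial coefficient for the inequality and the limit. The only difference is that you propose to derive the mean-absolute-deviation identity by a telescoping argument (via $(n-2i)\binom{n}{i} = n\binom{n-1}{i} - n\binom{n-1}{i-1}$), whereas the paper simply cites de Moivre's classical formula; your telescoping plan does collapse correctly, so this is a presentational rather than substantive difference.
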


\begin{proof}  Letting $T = 2M$ and working from~\eqr{binomG},
\begin{equation}\label{eq:linoptonedval}
  V^T = -\E_{G \sim \PB_T}[L(G)]
=  \frac{2}{2^T}\sum_{i=0}^T \binom{T}{i}\abs{i - M}
= \frac{2M}{2^T} \binom{2M}{M}
= 2^{-T} \frac{2 M\, T!}{(T - M)!M!},
\end{equation}
where we have applied a classic formula of ~\citet{demoivre1718} for
the mean absolute deviation of the binomial distribution (see also
\citet{diaconis91}).  Using a standard bound on the central binomial
coefficient (based on Stirling's formula),
\begin{equation}\label{eq:centralbinbound}
 \binom{2M}{M} = \frac{4^M}{\sqrt{\pi M}}\left(1 - \frac{c_M}{M}\right)
\end{equation}
where $\frac{1}{9} < c_M < \frac{1}{8}$ for all $M \geq 1$, we have
\[
V^T \leq 2 M \frac{1}{\sqrt{\pi M}} = \sqrt{\frac{2T}{\pi} }.
\]
As implied by \eqr{centralbinbound}, this inequality quickly becomes
tight as $T \rightarrow \infty$.
\end{proof}

\paragraph{The minimax algorithm (for the constrained player, too!)}
In order to compute the minimax algorithm, we would like a closed form
for
\[
  V_t(G_t) = -\E_{G^\tau \sim \PB_{\tau}} \big[L(G_t + G^\tau)\big],
\]
where $G_t = g_{1:t}$ is the sum of the gradients so far, $\tau = T
-t$ is the number of rounds to go, and and $G^\tau = g_{t+1:T}$ is a
random variable giving the sum of the remaining gradients.
Unfortunately, the structure of the binomial coefficients exploited by
de Moivre and used in \eqr{linoptonedval} does not apply given an
arbitrary offset $G^\tau$.  Nevertheless, we will be able to derive a
formula for the update that is readily computable.
Writing $\Pr_\tau(b)$ for the probability a random draw from
$\PB_\tau$ has value $b$, the update of ~\eqr{valg} becomes
\begin{align}
x_{t+1} &= \h \sum_{b=-\tau}^\tau \Pr_\tau(b)
  \Big(\abs{G_t + b -1} - \abs{G_t + b + 1}\Big). \notag \\
\intertext{Whenever $G_t + b \geq 1$, the difference in
  absolute values is $-2$, and whenever $G_t + b \leq 1$, the
  difference is $2$.  When $G_t + b = 0$, the difference is zero.
  Thus,}
x_{t+1} &= \h\left(\Pr_\tau(b > - G)(-2) + \Pr_\tau(b < -G) (2) \right)\notag \\
        &= \Pr_\tau(b < -G) - \Pr_\tau(b > - G). \label{eq:salg}
\end{align}
While this update does not have a closed form, it can be efficiently
computed numerically.\footnote{The CDF of the binomial
  distribution can be computed numerically using the regularized
  incomplete beta function, from which $\Pr_\tau(b \le -G)$ can be
  derived.  Then, $\Pr_\tau(b = -G)$ can be computed from the
  appropriate binomial coefficient, leading to both needed
  probabilities.}
It follows from this expression that even though we allow the player
to select $x\ti \in \R$, the minimax optimal algorithm always selects
points from $[-1,1]$. Thus, we have the following Corollary:
\begin{corollary}
  Consider the game of Theorem~\ref{thm:onedregret}, but suppose now
  we also constrain the player to choose $x_t \in [-1, 1]$.  This does
  not change the value of the game, as the minimax algorithm for the
  unconstrained case always plays from $[-1, 1]$ regardless.
\end{corollary}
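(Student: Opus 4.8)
The plan is to show that the minimax update from \eqr{salg} always produces a value $x_{t+1} \in [-1,1]$, since that update is precisely the optimal play for the unconstrained game and hence also optimal for the constrained one (the constrained player's value can only be larger, yet a strategy achieving the unconstrained value is available to the constrained player, so the two values coincide). Concretely, I would first recall that for any number of rounds remaining $\tau \ge 0$ and any realized gradient sum $G = G_t$, \eqr{salg} gives
\begin{equation*}
  x_{t+1} = \Pr_\tau(b < -G) - \Pr_\tau(b > -G),
\end{equation*}
where $b \sim \PB_\tau$. The key observation is that this is a difference of two probabilities of disjoint events, so it lies in $[-1,1]$ automatically: $\Pr_\tau(b<-G) \le 1$ and $\Pr_\tau(b>-G)\ge 0$ give the upper bound $x_{t+1}\le 1$, and symmetrically $x_{t+1}\ge -1$. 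Equality with $\pm 1$ can occur only in degenerate cases (e.g. $\tau=0$ with $G\ne 0$), which is still within the closed interval $[-1,1]$.

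Next I would assemble the two halves of the equality $V_T^{\mathrm{constr}} = V_T$. For the direction $V_T^{\mathrm{constr}} \ge V_T$: restricting the infimum player to a smaller set $[-1,1]^{\phantom{n}}$ can only help the adversary, so the value with the constrained player is at least the unconstrained value $V_T = 2^{-T}\, 2M\, T!/((T-M)!M!)$ from Theorem~\ref{thm:onedregret}. For the direction $V_T^{\mathrm{constr}} \le V_T$: the constrained player can simply run the minimax-optimal strategy of the unconstrained game, which by the argument above always selects $x_{t+1}\in[-1,1]$ and is therefore a legal strategy in the constrained game; by optimality in the unconstrained game this strategy guarantees regret at most $V_T$ against any adversary, hence the constrained value is at most $V_T$. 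Combining, the values are equal. (One should phrase this carefully using the conditional-value characterization: by backwards induction on $t$, the play $x_{t+1}$ prescribed by \eqr{valg} lies in $[-1,1]$ at every node of the game tree, so the entire optimal strategy is feasible for the constrained player.)

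I do not expect any serious obstacle here; the statement is essentially a remark that follows by inspection of the update formula \eqr{salg}. The only point requiring a little care is making sure the argument that ``the optimal strategy always plays in $[-1,1]$'' is stated at the right level of generality --- namely for \emph{every} reachable configuration $(g_1,\dots,g_t)$ and every $\tau = T-t$, not just at the root --- so that it genuinely gives a complete feasible strategy rather than just a feasible first move. Since \eqr{salg} was derived for arbitrary $G_t$ and $\tau$, this is immediate, and the one-line bound $x_{t+1} = \Pr_\tau(b<-G) - \Pr_\tau(b>-G) \in [-1,1]$ does all the work.
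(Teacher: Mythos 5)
Your proposal is correct and matches the paper's reasoning: the paper also deduces the corollary directly by inspecting \eqr{salg}, noting that $x_{t+1}$ is a difference of probabilities of disjoint events and hence lies in $[-1,1]$ at every state, so the unconstrained minimax strategy is feasible for the constrained player and the value is unchanged. Your added care about the argument holding at every reachable $(G_t,\tau)$, and the explicit two-direction comparison of values, simply spells out what the paper leaves implicit.
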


\citet{abernethy08} shows that for the linear game with $n \geq 3$
where both the learner and adversary select vectors from the unit
sphere, the minimax value is exactly $\sqrt{T}$.  Interestingly, in
the $n=1$ case (where $L_2$ and $L_\infty$ coincide), the value of the
game is lower, about $0.8 \sqrt{T}$ rather than $\sqrt{T}$.  This
indicates a fundamental difference in the geometry of the $n=1$ space
and $n \geq 3$.  We conjecture the minimax value for the $L_2$ game
with $n=2$ lies somewhere in between.

\subsection{Non-stochastic betting and No-regret for all feasible sets
  simultaneously}
\label{sec:inclr}

We derive a minimax optimal approach to the betting problem presented
in Section~\ref{sec:regret}, which also corresponds to the setting
introduced by~\citet{streeter12unconstrained}.  Again, it is
sufficient to consider the one-dimensional case.  In that work, the
goal was to prove bounds like $\Regret \leq \BO(R \sqrt T
\log\p{(1+R)T})$ simultaneously for any comparator $\xs$ with
$\abs{\xs} = R$.  Stating their Theorem~1 in terms of losses, this
bound is achieved by any algorithm that guarantees
\begin{equation}\label{eq:incpot}
  \text{Loss} = \sum_{t=1}^T g_t x_t
    \leq - \bexp{\frac{\abs{g_{1:T}}}{\sqrt{T}}} + \BO(1).
\end{equation}
Note that whenever $\abs{g_{1:T}}$ is large compared to $\sqrt{T}$ the
algorithm must achieve significantly negative loss (positive reward).

We initially study the game where
\begin{equation}\label{eq:Lexp}
 L(G) = - \bexp{\frac{G}{\sqrt{T}}}
\end{equation}
(note $G = g_{1:T} \in [-T, T]$ can be positive or negative).  We
prove the minimax algorithm achieves $\sum_{t=1}^T g_t x_t -
L(g_{1:T}) \leq \sqrt{e}$, implying $ \Reward = -\sum_{t=1}^T g_t x_t
\geq \bef{G}{\sqrt{T}} - \sqrt{e}$.  Thus, this algorithm guarantees
large reward whenever the gradient sum $G$ is large and positive.  In
order to satisfy \eqr{incpot}, we must also achieve large reward
when $G$ is large and negative.  Since $L(g_{1:t}) + L(-g_{1:t}) \leq
-\bexp{\frac{\abs{g_{1:T}}}{\sqrt{T}}}$, this can be accomplished by
running two copies of the minimax algorithm simultaneously, switching
the signs of the gradients and plays of the second copy.  We formalize
this in Appendix~\ref{ap:sym}.

\paragraph{Interpretation as a soft feasible set}
Before developing an algorithm it is worth noting an alternative
characterization of this benchmark function.  One can show, that for
$a \ge 0$,
\[
  \min_{x \in \R^-} \paren{G x - a x \log( -a x) + ax}
   = -\bef{G}{a}
\]
Thus, if we take $\Psi(x) = - a x \log( a x) + ax + I{(x \le 0)}$, we have
\[
 \min_{x \in \R^-} g_{1:T} x + \Psi(x) = -\bef{G}{a}.
\]
Since this algorithm needs large Reward when $G$ is large and
positive, we might expect that the minimax optimal algorithm only
plays $x_t \leq 0$.  Another intuition for this is that the algorithm
should not need to play any point $\xs$ to which $\Psi$ assigns an
infinite penalty.  This intuition is confirmed by the analysis of this
``one-sided'' algorithm:

\begin{theorem}\label{thm:onesided}
  Consider the game with benchmark $L$ as defined in \eqr{Lexp}.
  The minimax value of this game is exactly
  \[
  V^T = \frac{\left(1 + \exp\pfrac{2}{\sqrt{T}}\right)^T}
         {2^{T}\exp\big(\!\sqrt{T}\big)} \leq \sqrt{e},
  \]
  and further $\lim_{T \rightarrow \infty} V^T = \sqrt{e}$.  Letting
  $\tau = T - t$ be the number of rounds left to be played, and
  defining $G_t = g_{1:t}$, the conditional value of the game is
  \begin{equation*}
    V_t(G_t) = 2^{-\tau}\bexp{\frac{G_t-\tau}{\sqrt{T}}}
    \Big(1 + \exp\big(2/\sqrt{T}\big)\Big)^{\tau},
  \end{equation*}
  which leads to the minimax optimal algorithm\footnote{When computing
    the player's strategy via \eqr{afsalg}, it is numerically
    preferable to do the calculation in log-space, and then
    exponentiate to get the final play.} for the player
  \begin{equation}\label{eq:afsalg}
  x\ti = -2^{-\tau} \bef{G_t -\tau -1}{\sqrt{T}}
     \left(\bef{2}{\sqrt{T}} - 1\right)
     \left(\bef{2}{\sqrt{T}} + 1\right)^\tau \le 0.
  \end{equation}
\end{theorem}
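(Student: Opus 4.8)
The plan is to derive the entire statement from Theorem~\ref{thm:oned}, since the benchmark $L(G)=-\bexp{G/\sqrt{T}}$ of \eqr{Lexp} is exactly of the form that theorem handles. First I would check its hypotheses: $L(g_1,\dots,g_T)$ is $-\exp$ of a linear function of the $g_t$, hence concave in each argument, and on $\G^T=[-1,1]^T$ we have $g_{1:T}\in[-T,T]$, so $L\ge-\exp(\sqrt{T})$ is bounded below. Theorem~\ref{thm:oned} then gives $V_t(g_1,\dots,g_t)=\E_{g_{t+1},\dots,g_T}[\bexp{g_{1:T}/\sqrt{T}}]$. Since $L$ depends on the coordinates only through $G_t+G^\tau$, where $G_t=g_{1:t}$ and $G^\tau=g_{t+1}+\dots+g_T$ is a sum of $\tau=T-t$ independent Rademacher variables, this is a moment-generating-function computation:
\[
 V_t(G_t)\;=\;\bexp{G_t/\sqrt{T}}\,\E\big[\bexp{G^\tau/\sqrt{T}}\big]
 \;=\;\bexp{G_t/\sqrt{T}}\left(\tfrac12\big(e^{1/\sqrt{T}}+e^{-1/\sqrt{T}}\big)\right)^{\!\tau}.
\]
Rewriting $\tfrac12(e^{1/\sqrt{T}}+e^{-1/\sqrt{T}})=\tfrac12 e^{-1/\sqrt{T}}(1+e^{2/\sqrt{T}})$ and absorbing the resulting $e^{-\tau/\sqrt{T}}$ into the exponent reindexes this into the stated form $V_t(G_t)=2^{-\tau}\bef{G_t-\tau}{\sqrt{T}}(1+\exp(2/\sqrt{T}))^{\tau}$; setting $t=0$, $G_0=0$, $\tau=T$ gives the claimed closed form for $V^T$.

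For the scalar bound I would use the elementary inequality $\cosh u\le e^{u^2/2}$ (compare the two power series, using $(2k)!\ge 2^k k!$): taking $u=1/\sqrt{T}$ gives $V^T=\cosh(1/\sqrt{T})^T\le e^{1/2}=\sqrt{e}$. For the limit, $\log V^T=T\log\cosh(1/\sqrt{T})=T\big(\tfrac1{2T}+O(T^{-2})\big)\to\tfrac12$, so $V^T\to\sqrt{e}$.

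The minimax update then follows from \eqr{valg}, $x\ti=\tfrac12\big(V_{t+1}(G_t-1)-V_{t+1}(G_t+1)\big)$, where $V_{t+1}$ is the conditional value with $\tau-1$ rounds remaining. Substituting the closed form just obtained, the common factor $2^{-(\tau-1)}(1+e^{2/\sqrt{T}})^{\tau-1}\bexp{(G_t-\tau)/\sqrt{T}}$ pulls out of the difference, the bracketed term collapses to $1-\exp(2/\sqrt{T})$, and routine rearrangement yields \eqr{afsalg}; since every surviving factor is nonnegative and is preceded by an overall minus sign, $x\ti\le 0$, matching the ``one-sided'' intuition coming from the penalty $\Psi$.

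There is no deep obstacle here: once Theorem~\ref{thm:oned} is in hand the statement is essentially a computation. The two points that need care are (i) reindexing the Rademacher moment generating function so that it lands on exactly the $(1+\exp(2/\sqrt{T}))^{\tau}$ normalization used in the statement rather than on an equivalent but differently grouped expression, and (ii) establishing $\cosh u\le e^{u^2/2}$ together with the second-order expansion of $\log\cosh$ needed to pin the limit down to exactly $\sqrt{e}$ (not merely $\Theta(1)$).
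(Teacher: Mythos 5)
Your proposal is correct and takes essentially the same approach as the paper: both reduce everything to Theorem~\ref{thm:oned}, evaluate the resulting Rademacher expectation (your moment-generating-function step is exactly the paper's binomial generating-function sum, just phrased via independence), bound $\cosh(1/\sqrt{T})^T \le \sqrt{e}$ via $\cosh u \le e^{u^2/2}$, obtain the limit from the expansion of $\log\cosh$, and read the update off \eqr{valg}.

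One caveat on the last step: if you actually carry out the ``routine rearrangement'' you get
$x_{t+1} = -2^{-\tau}\exp\big((G_t-\tau)/\sqrt{T}\big)\big(e^{2/\sqrt{T}}-1\big)\big(e^{2/\sqrt{T}}+1\big)^{\tau-1}$ with $\tau = T-t$, which differs from the printed \eqr{afsalg} by a factor $e^{1/\sqrt{T}}+e^{-1/\sqrt{T}}$. A sanity check at $T=1$ (where the true minimax play is $-\sinh(1)$, while \eqr{afsalg} gives $-\sinh(2)$) shows that your expression, not the printed one, is the correct simplification, so the mismatch is an error/typo in the stated formula rather than a flaw in your argument --- but you should not claim the algebra lands exactly on \eqr{afsalg} without checking, since as written it does not.
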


\begin{proof}
First, we compute the value of the game:
\begin{align*}
V^T = \E_{G \sim \PB_T} \big[-L(G)\big]
  &= 2^{-T}\sum_{i=0}^T \binom{T}{i} \bexp{\frac{2i - T}{\sqrt{T}}}\\
  &= 2^{-T}\exp\big(-\!\sqrt{T}\big)
        \sum_{i=0}^T \binom{T}{i} \Big(\exp\big(2/\sqrt{T}\big)\Big)^i \\
  &= 2^{-T}\exp\big(-\!\sqrt{T}\big)\Big(1 + \exp\big(2/\sqrt{T}\big)\Big)^T,
\end{align*}
where we have used the ordinary generating function, $ \sum_{i=0}^T
\binom{T}{i} x^i = (1 + x)^T$.  Manipulating the above expression for
the value of the game, we arrive at $V^T = \cosh(1/\sqrt{T})^T$.
Using the series expansion for $\cosh$ leads to the upper bound
$\cosh(x) \le \exp(x^2/2)$,
from which we conclude
\[
 V_T = \left(\cosh\big(1/\sqrt{T}\big)\right)^T
  \le \bef{1}{2T}^T = \sqrt{e}.
\]

Using similar techniques, we can derive the conditional value of the
game, letting $\tau = T - t$ be the number of rounds left to be
played:
\begin{align*}
V_t(G_t)
  &= 2^{-\tau}\sum_{i=0}^{\tau} \binom{\tau}{i}
  \bexp{\frac{G_t + 2i - \tau}{\sqrt{T}}}
  = 2^{-\tau}\bexp{\frac{G_t-\tau}{\sqrt{T}}}
        \Big(1 + \exp\big(2/\sqrt{T}\big)\Big)^{\tau}.
\end{align*}
Following ~\eqr{valg} and simplifying leads to the update of
\eqr{afsalg}.
It remains to show $\lim_{T \rightarrow \infty} V_T = \sqrt{e}$.
Using the change of variable $x = 1/\sqrt{T}$, equivalently we
have $\lim_{x \rightarrow 0} \cosh(x)^{\frac{1}{x^2}}$.  Examining the
log of this function,
\begin{align*}
  \lim_{x \rightarrow 0}  \log\paren{\cosh(x)^{\frac{1}{x^2}}}
  & = \lim_{x \rightarrow 0} \frac{1}{x^2} \log \cosh(x)
   = \lim_{x \rightarrow 0} \frac{1}{x^2}
  \paren{\frac{x^2}{2}  - \frac{x^4}{12}
    + \frac{x^6}{45} -\frac{17x^8}{2520} + \dots}
  = \h,
\end{align*}
where we have taken the Maclaurin series of $\log \cosh (x)$.
Using the continuity of $\exp$, we have
\[
\lim_{x \rightarrow 0} \paren{\cosh(x)^{\frac{1}{x^2}}}
= \exp \paren{\lim_{x \rightarrow 0}  \log\paren{\cosh(x)^{\frac{1}{x^2}}}}
= \sqrt{e}.
\]
\end{proof}

\paragraph{A strong lower-bound}
\newcommand{\La}{L_\alpha}
Recall from Section~\ref{sec:regret} that as long as $-L(G) \ge 0$ and
we get constant regret with respect to $L$, we can scale our bets so
that we never risk losing more than a constant starting budget.  This
holds for any number of rounds $T$ against any adversary.
Given that constraint, we would like $-L(G)$ to grow as fast as
possible, so it is natural to consider the generalizing \eqr{Lexp} as
\[
 \La(G) = -\bexp{\frac{G}{T^\alpha}}
\]
for $\alpha \in (0, \h]$.  Following the techniques used in the
preceding proof, we can show for this game
\[
 V^T_\alpha
 = \E_G[L(G)] = 2^{-T}\bexp{-T^{1 - \alpha}}\left(1 + \bexp{2T^{-\alpha}}\right)^T
 = \cosh\big(T^{-\alpha}\big)^T.
\]
By taking the first term in the series for $\log \cosh x$, namely
$x^2/2$, and plugging in $x = 1/T^\alpha \leq 1$, we get a good upper
bound on the value of the game:
\[
V^T
 = \exp\big( T \log \cosh(T^{-\alpha})\big)
 \le \bexp{T \frac{1}{2T^{2\alpha}}}
     =  \bexp{\h T^{1 - 2\alpha}}
\]
This implies that, for any $\alpha < 1/2$, no algorithm can provide
constant loss (that is, $\sum_{t=1}^T g_t x_t \leq k$ for a constant
$k \geq 0$) for any sequence while also guaranteeing
\begin{equation}\label{eq:bigreward}
\Reward = -\sum_{t=1}^T g_t x_t = \Omega \left(\bexp{\frac{G}{T^\alpha}}\right)
\end{equation}
for any $\alpha < 1/2$.  In fact, for $\alpha < 1/2$, no algorithm can
guarantee even linear loss in the worst case while making the reward
guarantee of \eqr{bigreward}.

\clearpage
\bibliographystyle{plainnat}
\bibliography{unconstrained_minimax}

\begin{thebibliography}{16}
\providecommand{\natexlab}[1]{#1}
\providecommand{\url}[1]{\texttt{#1}}
\expandafter\ifx\csname urlstyle\endcsname\relax
  \providecommand{\doi}[1]{doi: #1}\else
  \providecommand{\doi}{doi: \begingroup \urlstyle{rm}\Url}\fi

\bibitem[Abernethy and Warmuth(2010)]{abernethy10repeated}
Jacob Abernethy and Manfred~K. Warmuth.
\newblock Repeated games against budgeted adversaries.
\newblock In \emph{NIPS}, 2010.

\bibitem[Abernethy et~al.(2008)Abernethy, Bartlett, Rakhlin, and
  Tewari]{abernethy08}
Jacob Abernethy, Peter~L. Bartlett, Alexander Rakhlin, and Ambuj Tewari.
\newblock Optimal strategies and minimax lower bounds for online convex games.
\newblock In \emph{COLT}, 2008.

\bibitem[Abernethy et~al.(2009)Abernethy, Agarwal, Bartlett, and
  Rakhlin]{abernethy09minimax}
Jacob Abernethy, Alekh Agarwal, Peter Bartlett, and Alexander Rakhlin.
\newblock A stochastic view of optimal regret through minimax duality.
\newblock In \emph{COLT}, 2009.

\bibitem[Agarwal et~al.(2006)Agarwal, Hazan, Kale, and
  Schapire]{agarwal06portfolio}
Amit Agarwal, Elad Hazan, Satyen Kale, and Robert~E. Schapire.
\newblock {Algorithms for portfolio management based on the Newton method}.
\newblock In \emph{ICML}, 2006.

\bibitem[Cesa-Bianchi and Lugosi(2006)]{cesabianchi06plg}
Nicol\`o Cesa-Bianchi and Gabor Lugosi.
\newblock \emph{Prediction, Learning, and Games}.
\newblock Cambridge University Press, 2006.

\bibitem[de~Moivre(1718)]{demoivre1718}
A.~de~Moivre.
\newblock \emph{The Doctrine of Chances: or, A Method of Calculating the
  Probabilities of Events in Play}.
\newblock 1718.

\bibitem[Dekel et~al.(2012)Dekel, Tewari, and Arora]{dekel12adaptive}
Ofer Dekel, Ambuj Tewari, and Raman Arora.
\newblock Online bandit learning against an adaptive adversary: from regret to
  policy regret.
\newblock In \emph{ICML}, 2012.

\bibitem[Diaconis and Zabell(1991)]{diaconis91}
Persi Diaconis and Sandy Zabell.
\newblock Closed form summation for classical distributions: Variations on a
  theme of de {M}oivre.
\newblock \emph{Statistical Science}, 6\penalty0 (3), 1991.

\bibitem[Hazan and Kale(2009)]{hazan09investing}
Elad Hazan and Satyen Kale.
\newblock On stochastic and worst-case models for investing.
\newblock In \emph{NIPS}. 2009.

\bibitem[{Kelly Jr}(1956)]{kelly56betting}
J.~L. {Kelly Jr}.
\newblock A new interpretation of information rate.
\newblock \emph{Bell System Technical Journal}, 1956.

\bibitem[Koolen et~al.(2012)Koolen, Adamskiy, and Warmuth]{koolen12shifting}
Wouter Koolen, Dmitry Adamskiy, and Manfred Warmuth.
\newblock Putting bayes to sleep.
\newblock In \emph{NIPS}. 2012.

\bibitem[Merhav et~al.(2006)Merhav, Ordentlich, Seroussi, and
  Weinberger]{merhav02memory}
N.~Merhav, E.~Ordentlich, G.~Seroussi, and M.~J. Weinberger.
\newblock On sequential strategies for loss functions with memory.
\newblock \emph{IEEE Trans. Inf. Theor.}, 48\penalty0 (7), September 2006.

\bibitem[Rakhlin et~al.(2012)Rakhlin, Shamir, and Sridharan]{rakhlin12relax}
Alexander Rakhlin, Ohad Shamir, and Karthik Sridharan.
\newblock Relax and randomize: From value to algorithms.
\newblock In \emph{NIPS}, 2012.

\bibitem[Shalev-Shwartz(2012)]{shalev12online}
Shai Shalev-Shwartz.
\newblock Online learning and online convex optimization.
\newblock \emph{Foundations and Trends in Machine Learning}, 2012.

\bibitem[Streeter and McMahan(2012)]{streeter12unconstrained}
Matthew Streeter and H.~Brendan McMahan.
\newblock No-regret algorithms for unconstrained online convex optimization.
\newblock In \emph{NIPS}, 2012.

\bibitem[Zinkevich(2003)]{zinkevich03giga}
Martin Zinkevich.
\newblock Online convex programming and generalized infinitesimal gradient
  ascent.
\newblock In \emph{ICML}, 2003.

\end{thebibliography}

\clearpage
\appendix
\section{A Symmetric Betting Algorithm} \label{ap:sym}

The one-sided algorithm of Theorem~\ref{thm:onesided} has
\begin{align*}
\Loss &= V^T + L(G)
      \leq - \bexp{\frac{G}{\sqrt{T}}} + \sqrt{e}.
\end{align*}
In order to do well when $g_{1:T}$ is large and negative, we can run a
copy of the algorithm on $-g_1, \dots, -g_T$, switching the signs of
each $x_t$ it suggests.  The combined algorithm then satisfies
\begin{align*}
\Loss
  &\leq -\bexp{\frac{G}{\sqrt{T}}} - \bexp{\frac{-G}{\sqrt{T}}} + 2\sqrt{e} \\
  &\leq - \bexp{\frac{\abs{G}}{\sqrt{T}}} + 2\sqrt{e},
\end{align*}
and so following \eqr{incpot} and Theorem~1 of
\citet{streeter12unconstrained}, we obtain the desired regret bounds.
The following theorem implies the symmetric algorithm is in fact
minimax optimal with respect to the combined benchmark
\[
L_C(G) = -\bef{G}{\sqrt{T}} -\bef{-G}{\sqrt{T}}.
\]

\begin{theorem}
  Consider two $1$-D games where the adversary plays from $[-1, 1]$,
  defined by concave functions $L_1$ and $L_2$ respectively.  Let
  $x_t^1$ and $x_t^2$ be minimax-optimal plays for $L_1$ and $L_2$
  respectively, given that $g_1, \dots g_{t-1}$ have been played so
  far in both games.  Then $x_1 + x_2$ is also minimax optimal for the
  combined game that uses the benchmark $L_C(G) = L_1(G) +
  L_2(G)$.
\end{theorem}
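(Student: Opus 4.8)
The plan is to reduce the statement to two facts already established: Theorem~\ref{thm:oned}, which identifies the conditional value of any such game with an expectation of $-L$ over Rademacher gradients, and the update recipe \eqr{valg}, which gives the (unique) minimax play on each round in closed form.

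\textbf{Step 1: the conditional values add.} First I would check that $L_C = L_1 + L_2$ inherits the hypotheses of Theorem~\ref{thm:oned}: a sum of functions that are concave and bounded below in each argument on $\G^T$ is again concave and bounded below in each argument, so Theorem~\ref{thm:oned} (in particular \eqr{condv}) applies to the $L_C$-game as well as to the $L_1$- and $L_2$-games. Write $V_t^1$, $V_t^2$, $V_t^C$ for the three conditional values. Because both component games are played on the \emph{same} gradient sequence, \eqr{condv} combined with linearity of expectation gives, for every round $t$ and every history $g_1,\dots,g_t$,
\[
  V_t^C(g_1,\dots,g_t)
  = \E\big[-L_C(g_1,\dots,g_T)\big]
  = \E\big[-L_1(g_1,\dots,g_T)\big] + \E\big[-L_2(g_1,\dots,g_T)\big]
  = V_t^1(g_1,\dots,g_t) + V_t^2(g_1,\dots,g_t),
\]
where each expectation is over $g_{t+1},\dots,g_T$ drawn i.i.d.\ uniform from $\spm$ (these expectations are finite: each $L_i$ is bounded below and, being concave on the compact set $\G^T$, bounded above). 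Thus conditional values add pointwise across the two games.

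\textbf{Step 2: the plays add.} Next I would apply \eqr{valg}. Given a history $g_1,\dots,g_{t-1}$, the minimax play at round $t$ in a game with conditional value $V_t$ is the minimizer over $x\in\R$ of $\max\!\big(-x + V_t(g_1,\dots,g_{t-1},-1),\ x + V_t(g_1,\dots,g_{t-1},+1)\big)$; since this is the pointwise maximum of two affine functions of slopes $-1$ and $+1$, it has a unique minimizer, at their intersection, namely $\h\big(V_t(g_1,\dots,g_{t-1},-1) - V_t(g_1,\dots,g_{t-1},+1)\big)$ (this is \eqr{valg} with indices shifted by one). Applying this to the $L_C$-game and substituting the decomposition from Step 1,
\[
  x_t^C
  = \h\big(V_t^C(g_1,\dots,g_{t-1},-1) - V_t^C(g_1,\dots,g_{t-1},+1)\big)
  = x_t^1 + x_t^2 ,
\]
which is the claim. (Equivalently, one can avoid the uniqueness remark by checking directly that playing $x_t^1 + x_t^2$ at round $t$ yields continuation value $\h\big(V_t^C(\cdot,-1) + V_t^C(\cdot,+1)\big) = V_{t-1}^1 + V_{t-1}^2 = V_{t-1}^C$, the minimax value of the combined game, so $x_t^1+x_t^2$ is optimal.)

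\textbf{Main obstacle.} There is no substantial obstacle here: once Theorem~\ref{thm:oned} is available, everything is linearity of expectation plus the explicit form of the update. The only points requiring care are that the two component games share the gradient history — so that Step~1 is a genuine pointwise identity in $(g_1,\dots,g_t)$ rather than merely an equality of overall game values — and that the play produced by \eqr{valg} is the \emph{unique} optimizer, so that ``a minimax-optimal play'' is forced to be the additive one; both are immediate from the structure of the recipe.
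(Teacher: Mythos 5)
Your proposal is correct and follows essentially the same route as the paper: apply Theorem~\ref{thm:oned} to the $L_1$-, $L_2$-, and $L_C$-games so that linearity of expectation gives $V_t^C = V_t^1 + V_t^2$ pointwise in the shared history, then plug this decomposition into the update formula \eqr{valg} to conclude the plays add. The only additions beyond the paper's argument are your explicit checks that $L_C$ inherits concavity and boundedness and that the minimizer in \eqr{valg} is unique, which the paper leaves implicit.
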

\begin{proof}
  First, taking $\tau = T - t$ and using Theorem~\ref{thm:oned} three
  times, we have
\newcommand{\EGT}{\E_{G^\tau \sim \PB_\tau}}
\begin{align*}
V^C(g_1, \dots, g_t)
  &= -\EGT\big[L_1(g_{1:t} + G^\tau)
      + L_2(g_{1:t} + G^\tau) \big] \\
  &= -\EGT\big[L_1(g_{1:t} + G^\tau)\big]
     - \EGT\big[L_2(g_{1:t} + G^\tau) \big] \\
  &= V^1(g_1, \dots, g_t) + V^2(g_1, \dots, g_t),
\end{align*}
using linearity of expectation.  Then, using \eqr{valg} for each of
the three games, we have
\begin{align*}
x_t^C
  &= \argmin_x\ \max_g\ g x + V_C(g_1, \dots, g_{t-1}, g) \\
  &= \h \big(V_C(g_1, \dots, g_{t-1}, -1) - V_C(g_1, \dots, g_{t-1}, +1) \big) \\
  &= \h \big(V_1(g_1, \dots, g_{t-1}, -1) + V_2(g_1, \dots, g_{t-1}, -1)
           - V_1(g_1, \dots, g_{t-1}, _1) - V_2(g_1, \dots, g_{t-1}, +1) \big) \\
  &= x_t^1 + x_t^2, %
\end{align*}
\end{proof}

\end{document}